  \providecommand\BibTeX{{%
    \normalfont B\kern-0.5em{\scshape i\kern-0.25em b}\kern-0.8em\TeX}}}
\begin{document}

%%
%% The "title" command has an optional parameter,
%% allowing the author to define a "short title" to be used in page headers.
\title{WildGraph: Realistic Long-Horizon Trajectory Generation with Limited Sample Size}

%%
%% The "author" command and its associated commands are used to define
%% the authors and their affiliations.
%% Of note is the shared affiliation of the first two authors, and the
%% "authornote" and "authornotemark" commands
%% used to denote shared contribution to the research.
\author{Ali Al-Lawati}
\email{aha112@psu.edu}
\orcid{1234-5678-9012}
\affiliation{%
  \institution{The Pennsylvania State University}
  \city{University Park}
  \state{PA}
  \country{USA}
  \postcode{16802}
}

\author{Elsayed Eshra}
\email{eme5375@psu.edu}

\affiliation{%
  \institution{The Pennsylvania State University}
  \city{University Park}
  \state{PA}
  \country{USA}
  \postcode{16802}
}

\author{Prasenjit Mitra}
\email{pmitra@psu.edu}
\affiliation{%
  \institution{The Pennsylvania State University}
  \city{University Park}
  \state{PA}
  \country{USA}
  \postcode{16802}
}

%%
%% By default, the full list of authors will be used in the page
%% headers. Often, this list is too long, and will overlap
%% other information printed in the page headers. This command allows
%% the author to define a more concise list
%% of authors' names for this purpose.
\renewcommand{\shortauthors}{}

%%
%% The abstract is a short summary of the work to be presented in the
%% article.
\begin{abstract}
%Recent advances in wildlife tracking have played a key role in providing valuable spatial trajectory datasets to wildlife researchers. However, most published datasets are scarce due to the ethical and technical constraints of the collection process. A larger sample set can help enrich the training corpus in relation to deep learning applications and may be used to facilitate simulation tasks. 

%We propose a hierarchical approach to learn the global movement characteristics of the real dataset, and recursively refine localized regions.
Trajectory generation is an important task in movement studies. Generated trajectories augment the training corpus of deep learning applications, facilitate experimental and theoretical research, and mitigate the privacy concerns associated with real trajectories. This is especially significant in the wildlife domain, where trajectories are scarce due to the ethical and technical constraints of the collection process. In this paper, we consider the problem of generating long-horizon trajectories, akin to wildlife migration, based on a small set of real samples. We propose a hierarchical approach to learn the global movement characteristics of the real dataset, and recursively refine localized regions. Our solution, WildGraph discretizes the geographic path into a prototype network of H3\footnote{https://www.uber.com/blog/h3/} regions and leverages a novel recurrent VAE to probabilistically generate paths over the regions, based on occupancy. Experiments performed on two wildlife migration datasets demonstrate the remarkable capability of WildGraph to generate realistic months-long trajectories using a sample size as small as 60 while improving generalization compared to existing work. Moreover, WildGraph achieves superior or comparable performance on performance measures, including geographic imagery similarity. Our code is published on the following repository: \url{https://github.com/aliwister/wildgraph}.

%repository: \url{https://github.com/aliwister/wildgraph}.
\end{abstract}

%%
%% The code below is generated by the tool at http://dl.acm.org/ccs.cfm.
%% Please copy and paste the code instead of the example below.
%%
\begin{CCSXML}
<ccs2012>
   <concept>
       <concept_id>10010147.10010341.10010342</concept_id>
       <concept_desc>Computing methodologies~Model development and analysis</concept_desc>
       <concept_significance>500</concept_significance>
       </concept>
   <concept>
       <concept_id>10010147.10010257.10010293</concept_id>
       <concept_desc>Computing methodologies~Machine learning approaches</concept_desc>
       <concept_significance>500</concept_significance>
       </concept>
   <concept>
       <concept_id>10010147.10010341.10010366.10010369</concept_id>
       <concept_desc>Computing methodologies~Simulation tools</concept_desc>
       <concept_significance>500</concept_significance>
       </concept>
   <concept>
       <concept_id>10002951.10003227.10003236.10003237</concept_id>
       <concept_desc>Information systems~Geographic information systems</concept_desc>
       <concept_significance>500</concept_significance>
       </concept>
 </ccs2012>
\end{CCSXML}

\ccsdesc[500]{Computing methodologies~Model development and analysis}
\ccsdesc[500]{Computing methodologies~Machine learning approaches}
\ccsdesc[500]{Computing methodologies~Simulation tools}
\ccsdesc[500]{Information systems~Geographic information systems}

%%
%% Keywords. The author(s) should pick words that accurately describe
%% the work being presented. Separate the keywords with commas.
\keywords{small data, data mining, trajectory generation, wildlife movement}

%%\received{20 February 2007}
%%\received[revised]{12 March 2009}
%%\received[accepted]{5 June 2009}

%%
%% This command processes the author and affiliation and title
%% information and builds the first part of the formatted document.
\maketitle

\section{Introduction}

Recent advances in wildlife tracking have played a key role in contributing spatial trajectory datasets to wildlife researchers. These datasets provide invaluable insights into the patterns and behaviors of movement, enabling a deeper understanding of the underlying mechanisms driving wildlife movement. In particular, wildlife trajectories are utilized by scientists across different fields to promote wildlife conservation~\cite{wall_novel_2014}, anti-poaching efforts~\cite{park_ape_2015}, mitigating human-wildlife conflict~\cite{buchholtz_using_2020}, and to help guide policy towards wildlife preservation~\cite{tomkiewicz_global_2010}.

However, most published datasets are scarce, i.e. they include a small sample set of animals. Wildlife researchers have cited ethical and technical constraints involved in wildlife trajectory data collection~\cite{dore_review_2020}. In particular, data is traditionally collected using devices attached directly to the subject, which requires capturing and possibly sedating it. The devices are further constrained by the animal's size, and may be prone to intermittent or permanent failures, thus limiting the availability of trajectory data. 

To overcome the challenges associated with collecting real movement data, trajectory generation has been considered in various domains, including human mobility~\cite{long_practical_2023, yuan_activity_2022, rossi_human_2021}, vehicle mobility~\cite{cao_generating_2021, rossi_vehicle_2021}, as well as wildlife~\cite{al-lawati_wildgen_2023, technitis_b_2015} movement studies.  Generated trajectories help enrich the training corpus to advance deep learning (DL) models~\cite{castelli2023enhancing}, and identify trends and patterns. For example, with trajectory augmentation~\cite{zhou2021improving, 10.1145/3615885.3628008}, known past poaching incidents mapped against a large trajectory set ($real$ $ \cup$ $generated$) help enforcement efforts to target high-risk regions~\cite{9101618}. Furthermore, trajectory generation plays a key role in addressing privacy concerns of releasing real data, which may put wildlife at the risk of poaching~\cite{tig}. Generated trajectories also provide valuable insight into experimental studies, scenario planning, and risk assessment.  Researchers can simulate the effects of policies and environmental decisions, thereby improving the robustness and generalization of their models and analyses~\cite{beach_scenario_2015, catano_using_2015}.

Generative models powered by deep learning algorithms have demonstrated impressive capabilities in learning the knowledge semantics of a dataset in various domains and capturing it to produce original data based on the learned patterns. Simple methods based on variational autoencoders (VAEs) have shown a tremendous capability to generate meaningful samples based on a small data sample, i.e. the {\bf small data problem}. Ideally, we want the generated set of trajectories to have the same characteristics (measured by, e.g. Hausdorff Distance or correlation coefficient) of a set of real trajectories, and generates original samples that preserve the patterns of the real trajectories (e.g., clusters or distance similarity).
For example, in Figure~\ref{fig:teaser}, it can be visually verified that WildGraph generates much better trajectories, than using a Generative Adversarial Network (GAN) for this problem.

%wildlife trajectories are utilized by scientists to promote wildlife conservation~\cite{wall_novel_2014}, anti-poaching efforts~\cite{park_ape_2015}, climate change studies~\cite{mcrae_multi-model_2008}, human-wildlife conflict~\cite{buchholtz_using_2020}, and to help guide policy towards wildlife preservation~\cite{tomkiewicz_global_2010}.

%Furthermore, the collection process of real trajectory data may be associated with various challenges. Wildlife researchers have cited ethical and technical constraints involved in wildlife trajectory data collection~\cite{dore_review_2020}. In particular, data is traditionally collected using devices attached directly to the subject, which requires capturing and possibly sedating it. The devices are further constrained by the size of the animal, and may be prone to intermittent or permanent failures, thus limiting the availability of trajectory data. 

%WildGen: 1) Continuous (Unconstrained) spatial domain2) Can generate unrealistic trajectories, e.g., locations in the ocean or over mountains3) benefits: good similarities 

%WildGraph:1) Constrained spatial space using graph representation 2) Provides realistic trajectories since it is constrained to the graph nodes 

\begin{figure}
    \centering
    \begin{subfigure}{0.15\textwidth}
        \centering
        \includegraphics[width=\linewidth, height=1.6cm]{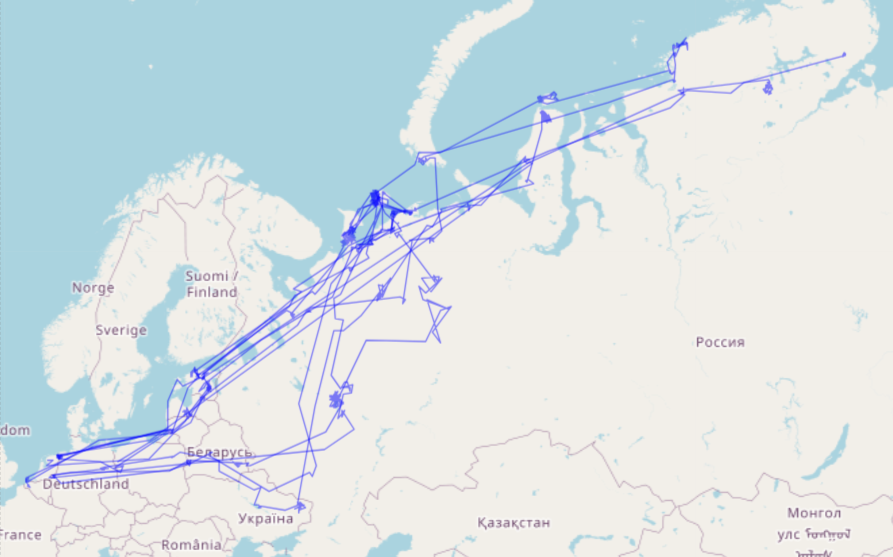}
        \caption{WildGraph}
        \label{fig:image1}
    \end{subfigure}
    \hfill
    \begin{subfigure}{0.15\textwidth}
        \centering
        \includegraphics[width=\linewidth, height=1.6cm]{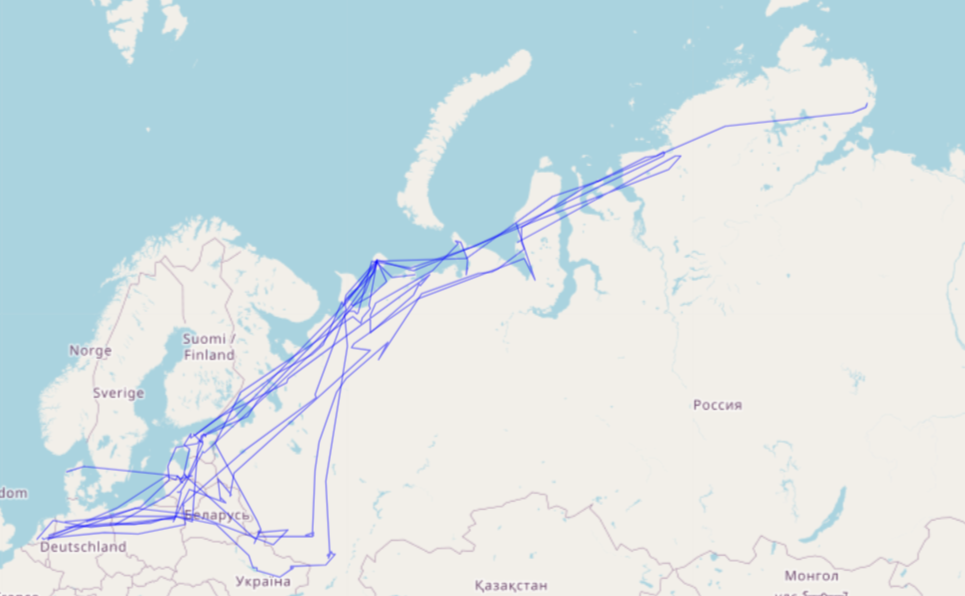}
        \caption{Real}
        \label{fig:image2}
    \end{subfigure}
    \hfill
    \begin{subfigure}{0.15\textwidth}
        \centering
        \includegraphics[width=\linewidth, height=1.6cm]{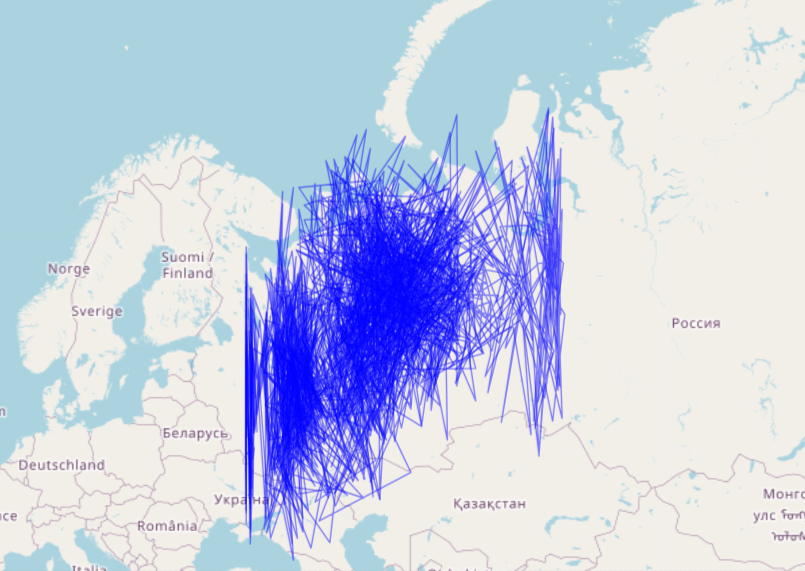}
        \caption{GAN}
        \label{fig:image3}
    \end{subfigure}
    \caption{WildGraph vs. Real vs. GAN-generated Trajectories}
    \label{fig:teaser}
\end{figure}

However, existing solutions in human and vehicle mobility fail to generalize to the wildlife domain. In particular, wildlife trajectory generation presents a unique set of challenges. First is the {\bf small data problem}: real trajectory datasets are sparse and include a small number of subjects. Second, the length of each trajectory tends to span several months, or years (e.g. in that long timespan, they may vary due to confounding factors, weather patterns, extreme events that are outliers, etc.) to provide practical value. Finally, often there is no established network information, e.g. roads, which wildlife may follow, (although in some land animals, there may be tracks on the ground trailed by the migrations, these are not rigid and animals often stray from them to create new trails/paths easily) thus increasing the complexity of the generation task. 

Despite these challenges, interestingly, simple deep learning-based techniques, such as VAEs have demonstrated promising results in generating long-horizon trajectories using a small set of real training samples~\cite{chen_trajvae_2021, al-lawati_wildgen_2023}. VAEs accomplish this by mapping trajectories in a probabilistic manner to a latent space and teaching the decoder to reconstruct the encoded trajectory from the latent representation. Hence, new trajectories that have similar properties as the real trajectory may be generated by sampling the latent space.

However, while samples generated from VAEs appear plausible, they fail to generalize to paths that have not been observed during training.
In addition, point selection along the trajectory by VAEs may not be geographically valid (see Fig~\ref{fig:graph5}). For example, just as vehicles may follow a specific road network, and may stop at rest areas and gas stations, wildlife tend to congregate around water bodies, and vegetation, but not in the middle of the ocean. 

%Hence, in reality VAEs perturb the original trajectories with a controlled level of noise to obtain a new set of new trajectories.  

%capture the properties of stopping points and the travel routes over the length of the trajectory. For example, while vehicles follow a specific road network, and may stop at rest areas and gas stations, wildlife may congregate around water bodies, and vegetation, but not on the peak of a mountain or in the middle of the ocean~\cite{We need to make sure to address this someway}. %[So what? What are properties? While the VAEs generate a route, they don't give us how long the mover spends at a stopping point. There are constraints that VAEs don't take into account. For example, vehicles may not stop ] 

%abrupt transisition
In this paper, we address these challenges by constructing a prototype network of regions for wildlife movement that models trajectories as a path across the network. We utilize the set of known trajectories to
construct the global regions, and recursively refine localized regions, aka nodes, into multiple granular nodes. This helps preserve the global direction of movement, and at the same time captures the local properties of the generated trajectories. By adjusting the region sizes of the network to restrict the selection of potential points from a particular region, we improve the generalization of the generated trajectories, and provide a better outcome across multiple benchmarks. %add one more sentence

%As Figure~\ref{fig:toknizer} illustrates, our framework recursively granulate each region based on the diameter($m$) of the region, which is the maximum distance between observed points in a given region.

Our proposed framework, WildGraph, models trajectories, real or generated, as paths across this network of regions. WildGraph leverages the H3 geographic library, to construct the network. H3 is an indexing system for map regions that has seen increasing use across trajectory-related research~\cite{han_graph-based_2021, musleh_kamel_2023}. It provides a standard, unbiased representation of each region, i.e. the choice of the regions and their sizes are not affected by any given trajectory and does not overfit the training data or a small subset of it. 

Once each known trajectory is encoded as a path over the network, we process it using a Variational Recurrent Network (VRN),
which trains a model that generates a dictionary of latent spaces for each region, as depicted in Figure~\ref{fig:wildgraph}. The details of our framework are discussed in further details in Section 3.

The contributions of this work can be summarized as follows:
\begin{itemize}
    \item We propose WildGraph, a framework that generates realistic long-horizon wildlife trajectories based on a sparse set of real trajectories using a novel VRN implementation.
    \item We contribute a novel approach in prototyping trajectories as paths across a network, and demonstrate its effectiveness in improving inference.
    \item We conduct multiple experiments against two wildlife datasets, and benchmark our methods against our earlier work, WildGEN~\cite{al-lawati_wildgen_2023}, a VAE based approach, a GAN based approach, and a Transformers-based approach based on the prototype network. We use various comparison measures including a geographic imagery-based measure using SatCLIP embeddings~\cite{klemmer2023satclip} to evaluate the geographic similarity between trajectories.
\end{itemize}

To the best of our knowledge, this paper along with our previous workshop paper: WildGEN~\cite{al-lawati_wildgen_2023} are the first to 
propose deep learning-based trajectory generation for wildlife movement. 
Specifically, WildGEN demonstrated that deep learning approaches produce significantly more realistic trajectories than traditional methods, such as Correlated Random Walk (CRW)~\cite{renshaw_correlated_1981} and Levy Flight~\cite{zaburdaev2015levy}. This study builds upon our previous work by refining the point selection process along trajectories, as assessed by the similarity between generated trajectories and real trajectories in our test set.
%{\textcolor{red}{A good generated trajectory accurately reflects natural movement patterns and behaviors of the wildlife. A key measure is how well the model can predict trajectories it has not encountered. Perturbing points is unlikely to predict unencountered trajectories whereas our model that learns the statistical properties of known trajectories.}}

As a result of the lack of recent application-specific benchmarks, and the key differences of trajectory generation applications in other mobility areas (e.g. based on a network), we compare our method with multiple standard generative algorithms, including VAE, Transformer, and GANs.

%[toy example with visual here to make the point about H3]

\begin{figure}
  \subcaptionbox*{Coarser zoom level}[.23\textwidth]{%
    \includegraphics[width=\linewidth,height=2.5cm]{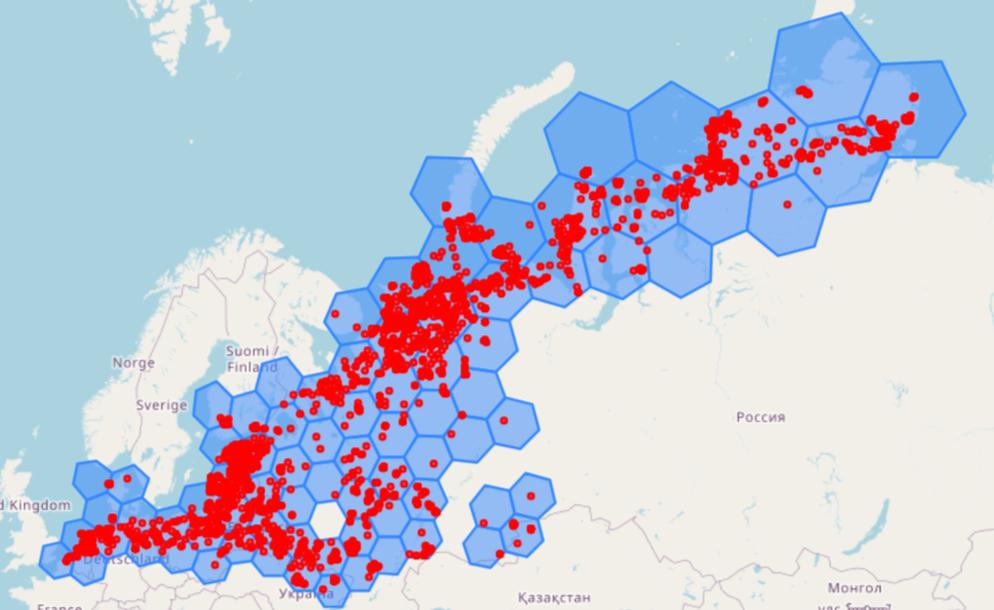}%
  }%
  \hfill
  \subcaptionbox*{Finer zoom level}[.23\textwidth]{%
    \includegraphics[width=\linewidth,height=2.5cm]{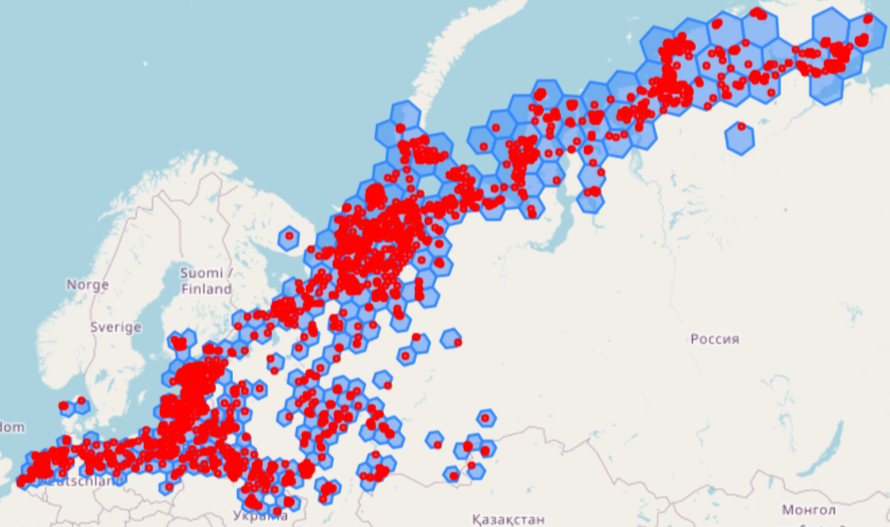}%
  }%

  \caption{
  H3 Regions of different zoom levels covering the set of known points
}
\label{fig:h3}
\end{figure}

%Marketing statement?
%What is new, what is hard, what already exists? Complete the thought if given a small number of samples, it makes the problem very hard.

%1. Introduce it in the loss function if something is generated not possible.
%2. 
%3. 

%Ours if the first work to do trajectory generation in the deep learning era. WildGen already shows that deep learning methods 

\section{Related Work}
In this section, we review recent works from different domains related to the methods and solutions presented in our paper.

\textbf{Trajectory Generation.}
In the pre-deep-learning era, trajectory generation was relatively subject specific. 
In ecology, researchers used methods such as Levy Flight~\cite{zaburdaev2015levy}, Correlated Random Walk (CRW)~\cite{technitis_b_2015}, and other velocity based methods~\cite{technitis_b_2015, ahearn2017context} that rely predominantly on the subject (animal) considered. The advent of deep-learning-based generative models such as VAEs and GANs resulted in some of the earliest attempts at using these approaches for trajectory generation. Chen et al.~\cite{chen_trajvae_2021} benchmarked GANs and VAEs for the task of generation, and found that VAEs provide more consistent results. 
Our earlier work, WildGEN, was among the first methods to address the problem of deep-learning-based trajectory generation for wildlife~\cite{al-lawati_wildgen_2023}.
We utilized VAEs with added layers of post-processing that mitigated some of the challenges discussed above. However, much like other VAE-based methods, it suffers from limited generalization. 

Most recently, Neural Temporal Processes were proposed for the task of generation in human mobility~\cite{wan_generative_2023}. This approach overcomes challenges in dealing with irregularly structured data, which may have various applications in wildlife. However, in our case, the datasets comprise of regularly sampled data.

\textbf{Trajectory Similarity.}
Trajectory similarity is a complementary problem to trajectory generation since the goal of generation is to produce a set of trajectories similar to a set of known trajectories. An interesting approach to the problem was considered by Yao, et al.~\cite{yao_trajgat_2022}, where a region is recursively subdivided to capture the region-based overlap similarity between compared trajectories. Similar to our approach,
they recursively split regions into sub-regions to create a balanced graph. Next, they utilize a graph-based attention layer to model the embedding of each node. 
In comparison, our approach utilizes node2vec~\cite{grover_node2vec_2016} to overcome
the over-parameterization in our problem given the shape of the training data. Furthermore, they represent the trajectory using an adjacency matrix which is not permutation invariant. A time-aware approach to generate similar graph snapshots at each step for a trajectory generation problem would render the approach considerably inefficient. Moreover, it will fail in the presence of cycles in the graph, as it cannot correctly determine the sequence of nodes in the trajectory.

\textbf{Graph Generation.}
Early generative graph models were based on adjacency matrix generation using encoder-decoder structures~\cite{you_graphrnn_2018, kurkova_graphvae_2018}. However, these approaches fail to capture the dynamic nature of spatio-temporal graphs. Recently, models such as STGD-VAEs~\cite{du_disentangled_2022}, and STGEN~\cite{ling_stgen_2023} were proposed to address these limitation. STGD-VAEs factor graphs into dependent ordinal values and jointly optimize the objective. On the other hand, STGEN utilizes the distribution of spatiotemporal walks to adversarially learn the characteristics of the graph. However, multi-objective approaches appear to be a poor fit for this problem, given the small sample sizes, and long horizon. We validate this in our experiments and demonstrate the GANs provide poor results on all metrics (see Fig~\ref{fig:teaser}).

\textbf{Sequence Modeling.}
Sequence-to-Sequence models have been used for various tasks in wildlife trajectory studies, including prediction~\cite{feng_latent_2019}, imputation~\cite{li_prediction_2021} and interpolation~\cite{wan_generative_2023}. They identify implicit patterns by using LSTM or RNN memory trained with new observations in an auto-regressive manner~\cite{park_sequence--sequence_2018}. However, the multivariate nature of spatial data often results in a quick deterioration of predictions on tasks involving long horizon prediction~\cite{sagheer_unsupervised_2019}, particularly in small data settings. To address this, 
Li et al.~\cite{li_prediction_2021} attempted to simplify the problem to a prediction of step size instead of location, but this approach offers very limited usefulness.
Our work adopts a more practical approach by leveraging the properties of the H3 spatial library to represent geospatial regions,
where each region is encoded with a string representing its address
encoded within the library, thus reducing the problem of generating a path to one in a single dimension.

In addition, encoding each region's connection (i.e., the probability of whether a bird will fly from one region to another) with other regions provides additional context to the problem, and 
facilitates generating long-horizon trajectories
based on a small set of training samples.

\section{Preliminaries}
A good generated trajectory accurately reflects the natural movement patterns and behaviors of the wildlife. A key measure is how well the model can predict trajectories it has not encountered, i.e., a held-out set. 

\subsection{Problem Statement}

%Leave a held-out set out. (1) Stable Marriage Algorithm (summation of the distance). (2) Distance between centroids. (3) Distance between two clusters.
% k-fold cross validation. Rotate and average. 

Formally, given a set of $n$ real trajectories in an area $\mathcal{A}$, each consisting of $m$ points, the objective is to produce a set of synthetic trajectories that maximizes the following:

\begin{enumerate}
    \item (Path Similarity) similarity between the points of the synthetic trajectories to that of the real trajectories.
    \item (Likeness) the alignment of the distribution of the synthetic trajectories with that of a (test) set of real trajectories.
    \item (Coverage) The percentage of samples in the held-out set closest to any generated sample, where ``closest" is defined using a distance metric (we leverage our path similarity metrics). This could be interpreted as a measure of how well the generated samples represent the true distribution of the held-out set. This measure is particularly critical in our context, as it emphasizes the feasibility and representativeness of the generated trajectories.
\end{enumerate}

%Formally, give n real trajectories denotes as, ......., where each point consists of....... 

%We aim to learn the empirical probability distribution of all states........, and be able to generate trajectories ....... following this distribution.  This is a difficult problem. 

%We alternatively implement that in an auto-regressive sense using a Variational Recurrent Approach, learning the conditional kernel (t+1 given t=0, 1,...., t)

%joint =  product from i to T( conditional(i+1 given 1, ..., i)
%A good generated trajectory accurately reflects the natural movement patterns and behaviors of the wildlife. A key measure is how well the model can predict trajectories it has not encountered, i.e a held-out set. 

 The task is to learn a model to generate synthetic trajectories that maximize the similarity between a test set comprising of $k$ real trajectories $T_{test} = T_1, T_2, \ldots, T_k$, each comprising $m$ spatio-temporal points, where $T_i = (p_1^i, p_2^i, \ldots, p_m^i)$, with $p_j^i$ representing latitude ($y$) and longitude ($x$) values. The objective is to find a corresponding set $S = {S_1, S_2, \ldots, S_t}$ of synthetic trajectories that best matches the (test set of) real trajectories.

    %That is implemented through ... Hence, we utilize a k-fold cross-validation and average the results on various similarity measures defined below.

To achieve this, we aim to maximize the overall similarity between $T_{test}$ and $S$, denoted as: $$\max \text{ Similarity}(T_{test},S)$$

Here, the similarity measure encompasses both Path similarity, Likeness, and Coverage. Note that using a held-out set to compare with the generated trajectories, post-training, ensures robustness and generalization of the proposed solution. Further details on the evaluation metrics utilized are provided in Section 3.5.

\subsection{Overview of WildGraph}
Figure~\ref{fig:wildgraph} demonstrates the overall structure of WildGraph. The training phase proceeds in the following way:
\begin{itemize}
    \item WildGraph accepts a sequence of training samples of known trajectories, each of which consists of $m$ points. All of these points are used in the Hierarchical Network Generator, which implements a prototype graph ($\mathcal{G}$) based on them. A node represents a region of $\mathcal{G}$, while the edges represent transitions from one node to another as observed in a given trajectory.
    \item A pre-training of the nodes of $\mathcal{G}$ is performed based on the connections identified. We utilize node2vec to generate representations of each node based on the input and output connections. The pre-training allows us to encode the relationships between the nodes and provides a representation learning of the transitions in the network, as well as reduces the dimensions of the embeddings, compared to structures such as Bag of words (BoW).
    \item The VRN aims to learn the empirical probability distribution of all states, i.e. learning the transition kernel, in order to generate trajectories following this distribution in an auto-regressive manner. Post-training, the output distribution of each node is aggregated in a latent dictionary (described below) that is used later during the generation process.
\end{itemize}

Once training is completed, the process of generating new trajectories utilizes 
the latent dictionary to stochastically select the next region based on the current region. The latent dictionary is a multi-valued dictionary that
collects latent values ($z$) that correspond to a given state. The generation process is repeated until the desired sequence length (input parameter) is reached. %\ali{Similar to other generative models such as Transformers, regions are represented as tokens and trajectories as an ordered list of tokens.}

For each region, an occupancy based sampling converts the region into a point. The occupancy sampler  divides the entire area $\mathcal{A}$ into smaller regions, i.e. dots, and builds a discrete heatmap on the entire area. The process of converting a region to a point involves sampling a dot (a small region) from the corresponding region based on the heatmap, and randomly sampling a point from the dot.

\subsection{H3 Grid System}
 As mentioned above, we utilize the H3 geospatial indexing system. H3 is an open-source library for creating and managing hierarchical spatial indexes. The core concept of H3 is the use  of hexagonal grids that partition the map into smaller spatial units at different zoom levels. The developers of H3 cite various advantages that include: (1) capacity to model large spatial regions and have better descriptive ability of spherical surfaces; (2) provides a single string-based address (e.g. \textbf{821fa7fffffffff}) that encompasses a geographic region of points; (3) built-in hierarchical description of a region at 14 zoom levels~\cite{h3geo}. 

 Note that H3 divides up the earth's surface into a set of hexagonal regions that are fixed once the zoom level of the hexagon is fixed; however one can choose smaller-sized hexagons to zoom into a region if so required.
 
 Figure~\ref{fig:h3} demonstrates points in the same $\mathcal{A}$ covered by regions using different zoom levels of H3. As mentioned above, each hexagon in either image has a unique string-based address. Our framework utilizes several of the features and benefits provided by H3.

\section{Methodology}

\begin{figure}
\centering
\begin{subfigure}{0.45\textwidth}
    \centering
    \includegraphics[width=\textwidth]{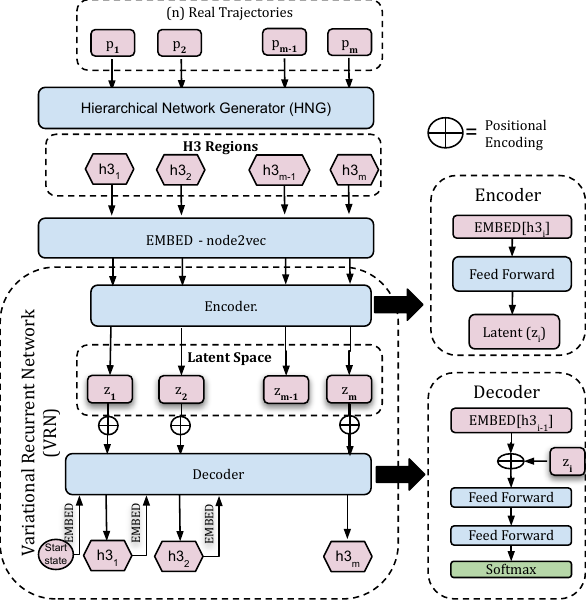}
    \caption{Training}
    \label{fig:subfig1}
\end{subfigure}

\begin{subfigure}{0.45\textwidth}
    \centering
    \includegraphics[width=\textwidth]{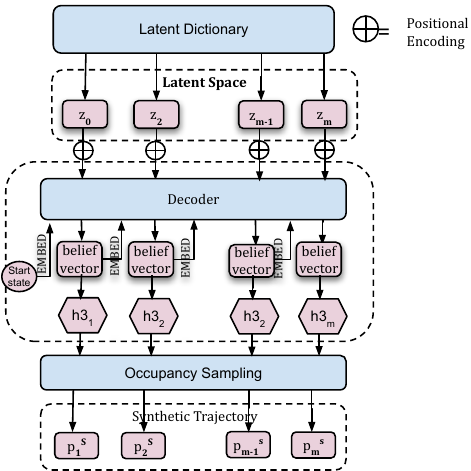}
    \caption{Generation}
    \label{fig:subfig2}
\end{subfigure}

\caption{WildGraph Framework, including both the training and generation steps. Some details are not depicted for clarity. }
\label{fig:wildgraph}
\end{figure}
Figure~\ref{fig:wildgraph} depicts the general architecture of WildGraph. The framework is composed of five main components:

\begin{itemize}
    \item Hierarchical prototype Network Generator (HNG)
    \item Graph-based Embedding Layer
    \item Variational Recurrent Network (VRN)
    \item Latent Dictionary 
    \item Occupancy Sampler
    
\end{itemize}

We describe each component in further detail in following subsections.

%Add line to set the stage for the next section

\subsection{Hierarchical 
Network Generator}
\begin{figure*}[!htb]
  \subcaptionbox*{Regions divided into sub-regions}[.33\textwidth]{%
    \includegraphics[width=\linewidth]{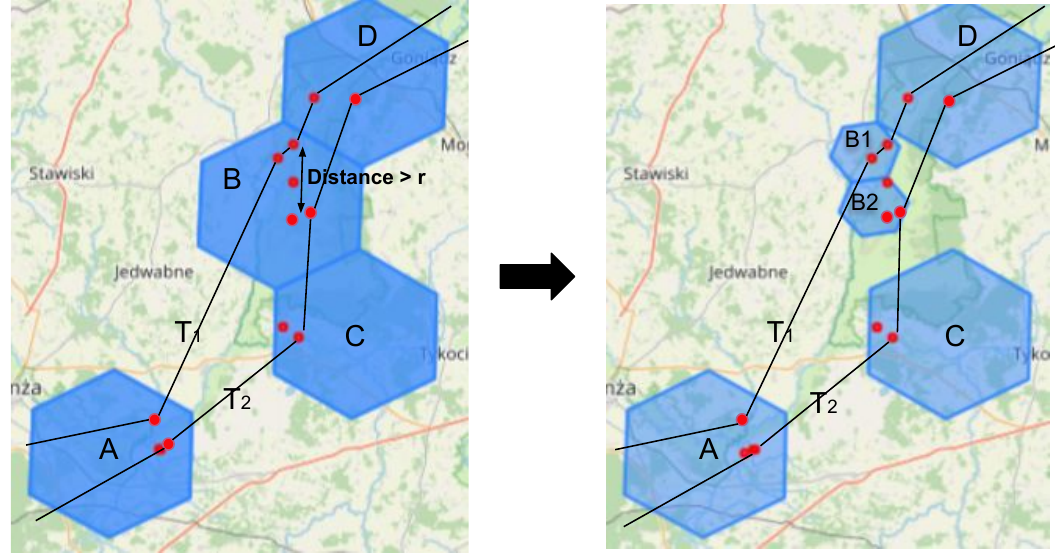}%
  }%
  \hfill
  \subcaptionbox*{A graph of transitions is captured and used to train a graph embedder}[.6\textwidth]{%
    \includegraphics[width=\linewidth]{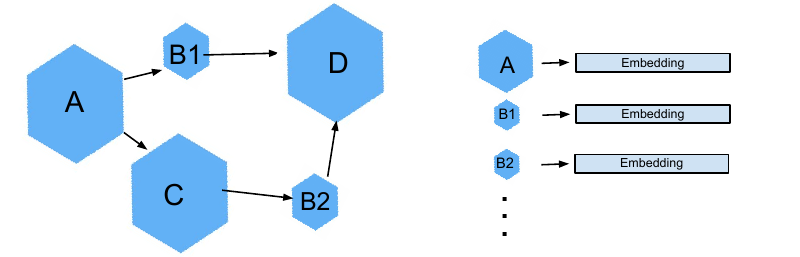}%
  }%

  \caption{
  Hierarchical Network Generator. The red points are the observed data points of where the animal was whereas the black lines are the linear interpolations connecting them.
}
\label{fig:toknizer}
\end{figure*}
 %H3 is an open-source library for creating and managing hierarchical spatial indexes. The core concept of H3 is the use hexagonal grids that partition the map into smaller spatial units, at different zoom levels. It provides various advantages, notably: (1) capacity to model large spatial regions, have better descriptive ability of \textbf{spherical surfaces;} (2) reduction of the dimensionality of the problem from\textbf{ 2D to 1D why do i need this?}; (3) Built-in hierarchical description of a region at \textbf{different zoom levels, why this?} .

The Hierarchical Network Generator (HNG) takes the set of known trajectory points and generates a prototype network graph ($\mathcal{G}$) that represents $\mathcal{A}$. Initially, a low zoom level is used to discretize $\mathcal{A}$, which promotes generalization. However, for each region that encompasses more than a single point, the diameter of the region is calculated and used as basis to discretize it into a higher zoom level as depicted in Figure~\ref{fig:toknizer}. This processing of localizing regions helps preclude unlikely transitions in a generated trajectory. 

The diameter is defined as the maximum distance between two points in a given region. Given a set of points $(p_1, p_2, ...., p_n)$ in a given region ($h3$), the diameter is:
$$
diameter(h3) = \max  || p_i - p_j || \hspace{.5cm} \forall p_i,p_j \in h3
$$
where || · || denotes the Euclidean norm (or distance). In the event $diameter(h3) > r$, for some split threshold parameter ($r$), the region is split, and all points $(p_1, p_2, ...., p_n) \in h3$ are reassigned more granular H3 regions. This process is repeated recursively as captured in Algorithm~\ref{alg:tokenizer}. 

Note that excessive zooming results in a large network that consists of too many nodes, which may limit the generalization of the model, and increase its complexity. The optimal threshold $r$ depends on properties of the dataset, and may be influenced by the resulting average density of the graph. In the experimental section, we empirically investigate how the choice of $r$ affects the evaluation metrics.

Next $\mathcal{G}$ is constructed for observed transitions between regions as depicted in Figure~\ref{fig:toknizer}
%Add comments to the algorithm
\begin{algorithm}[!th]
    \caption{Regionalize Trajectories($T_{train}$, r)}
    \label{alg:tokenizer}
    \begin{algorithmic}[1]
        \STATE  $zoom \leftarrow 2$  \COMMENT{Initialize H3 regions at this zoom level}
        %\STATE Initialize $P \leftarrow  (p_1,\ldots, p_{i)$
        %\STATE $H \leftarrow Assign\_H3([T_1,...,T_n])$  //\commentstyle{Assign H3 address to each point}
        %\FOR{$T \in [T_1,\ldots,T_n]$}
        \STATE $P \leftarrow Get\_Points(T_{train})$ \COMMENT{Get set of train-set trajectories}
        \STATE $H \leftarrow Assign\_H3(P, zoom)$  \COMMENT{Assign H3 to each point}
        
        \STATE $H3 \leftarrow SET(H)$  \COMMENT{Get unique H3 regions}
        \FOR{$h3 \in H3$}
            \IF{$diameter(h3) > r$}
                \STATE $zoom \leftarrow zoom + 1$
                \STATE $P_s \leftarrow P[h3]$ \COMMENT{Filter points from $P$ in $h3$}
                \STATE $H_s \leftarrow Assign\_H3(P_s, zoom)$
                \STATE $H3 \leftarrow H3 \setminus \{h3\}$ \COMMENT{Remove $h3$ from $H3$}
                \STATE $H3 \leftarrow H3 \cup H_s$ \COMMENT{Add new $H_s$ to $H3$}
            \ENDIF
        \ENDFOR

        %\ENDFOR
        \RETURN  $H3$
    \end{algorithmic}
\end{algorithm}

The final step involves creating the edge list of the network. Algorithm~\ref{alg:tokenizer} regionalizes each trajectory from a sequence of points to a sequence of regions:
\begin{align*}
T_1 = (h3_1^1, h3_2^2, \ldots) \\
T_2 = (h3_2^1, h3_2^2, \ldots) \\
\vdots \phantom{(h3_2^1, h3_2^2, \ldots)}
\end{align*}

The edges of $\mathcal{G}$ are the set of all $(h3_i^k, h3_j^k)$ pairs in each trajectory. While $\mathcal{G}$ is a directed graph, no weights are assigned to the edges.

\subsection{Network Graph Embedding}
A pretraining of $\mathcal{G}$ is performed to learn its characteristics and create embeddings for each node. For simplicity, we use a standard graph-based skip-gram embedding algorithm, namely node2vec to generate the embedding for each state. Node2vec is network representation learning technique to generate low-dimensional vector representations, i.e. embeddings, for the nodes in a graph. It performs a random walk, enabling it to efficiently capture both local and global structural information within a graph~\cite{grover_node2vec_2016}.

Node2vec employs a two-step random walk procedure to explore the graph space. First, it performs short random walks to capture local neighborhood information around each node. These walks balance between breadth-first and depth-first exploration, controlled by two parameters, $p$ and $q$, which dictate the likelihood of revisiting recent nodes or exploring new ones.

Given, the long-horizon nature of our trajectories, we select a higher value of $q$ to explore more distant nodes.
The resulting embeddings learned by node2Vec encode rich structural information about nodes. Node embeddings also significantly reduce the dimensionality of the problem and facilitate off-policy exploration: transitions that are not observed in the real dataset, but are potentially realistic transitions.

\subsection{Variational Recurrent Network}
The VRN uses the embedded sequences of trajectories to train the model by processing them sequentially. It combines elements of recurrent neural networks (RNNs) with variational inference techniques to model the sequential decision-making under uncertainty. Analogous to conventional RNNs, the VRN incorporates recurrent connections that allow it to maintain a hidden state representing the network's memory of past observations and actions. This hidden state evolves over time as the network processes the input sequentially.

The VRN is trained using a combination of self-supervised learning and recurrence techniques. Self-supervised learning is used to train the network to predict the next observation or action given the current observation and hidden state using an encoder-decoder model. Given a pair of region-mapped and graph embedded points $(h3_{emb_i}, h3_{emb_{i+1}})$, the encoder maps $h3_{emb_{i+1}}$ to the latent space representation $z_{i+1}$. The decoder attempts to reconstruct the input vector $h3_{emb_{i+1}}$ from a concatenation of $z_{i+1}$ with $h3_{emb_i}$, modified with positional encoding. The converged $z_{i+1}$ after training is captured and added to the multi-dictionary of $h3_{i}$. 

\subsubsection{Model Formulation}

For each transition $(h3_i,h3_{i+1})$ observed in any given trajectory in $\mathcal{G}$, The input embedding \( h3_{emb_i} \) is computed as:
\[
h3_{emb_i} = \text{Embedding}(h3_i)
\]

The latent representation \( z_i \) is obtained using a MLP encoder ($\theta$'s are learnt weights):

\[
z_{i+1} = \text{MLP}_{\text{enc}}(h3_{emb_{i+1}}; \theta_{\text{enc}})
\]

The decoder's hidden state \( h_{\text{dec}} \) is computed using a MLP decoder:
\[
h_{\text{dec}} = \text{MLP}_{\text{dec}}(h3_{emb_i}, z_{i+1}, PE; \theta_{\text{dec}})
\]

The logits are computed using a linear transformation of the decoder's output:
\[
\mathbf{o} = \text{Linear}(\mathbf{h}_{\text{dec}}, \mathbf{w}_{\text{out}}, \mathbf{b}_{\text{out}})
\]
where \(\mathbf{w}_{\text{out}}\) is the weight matrix and \(\mathbf{b}_{\text{out}}\) is the bias vector.

The logits are transformed into probabilities using the softmax function:
\[
prob_k = \frac{\exp(o_k)}{\sum_{j=1}^{V_{\text{out}}} \exp(o_j)}
\]
where \( prob_k \) is the predicted probability for the \( k \)-th class and \( V_{\text{out}} \) is the size of the output vocabulary, i.e. the number of regions.

\subsubsection{Loss Function}
The cross-entropy loss between the true label \( h3_{i+1} \) and the predicted probabilities \( \mathbf{prob} \) is computed as:
\[
\mathcal{L} = -\sum_{k=1}^{V_{\text{out}}} enc(h3_{i+1}) \log(prob_k)
\]
where \( enc(h3_{i+1})\) is the one-hot encoded true label.

Post-training, the hidden state (the output of the bottleneck layer in the autoencoder) reflects the possible transitions learned from the training trajectories, though it is represented in a compressed space.

\subsubsection{Generation}
 
During generation, we first sample the latent space, add positional encodings (PE), and append the belief vector, which is a weighted sum of the top-$y$ states based on their probabilities (\(\sum_{i=1}^{y} \text{Prob}(h3_i) \times h3_{emb_i}\)), at the previous time step. This belief vector considers all previously visited states autoregressively, rendering each generated trajectory unique. Empirical results demonstrate that even a set of 1000 generated trajectories exhibits a unique ordering of regions. We formally prove the low likelihood of duplicate generated trajectories in Appendix~\ref{appendixa}.

%The process of generating new trajectories utilizes a latent dictionary which we discuss below. The dictionary creates a probability distribution of potential transitions from the current state. Much like Transformers, the next state is selected from the probabilities generated by the decoder, and future states are auto-regressively generated in the same manner. Once the trajectory length is met, occupancy sampling is used to translate each region into a point. The sampling uses the occupancy of the generated region, much like a heatmap. This helps ensure that the selection of points is based on probable points as opposed to points generated by other methods that may fail to represent the movement of the wildlife animal as accurately.

\subsubsection{Positional Encoding}

Positional encoding (PE) assigns unique representations to each position in the sequence. This allows the model to differentiate between regions based on their position in the sequence. For example, a bird may remain in a given region for several days before proceeding to the next region. PE enables the model to learn from the sequential context effectively. We use a simple approach by concatenating the raw index of the position to the input of the decoder. This helps the network to associate certain transitions with the position, while avoiding the risk that the VRN may fail to learn a more sophisticated method given the small data nature of the problem.
\begin{comment}

During inference, the VRN makes sequential decisions by iteratively updating its hidden state based on observed inputs and selecting actions according to its learned policy.
The network's belief state is updated using Bayesian inference techniques, allowing it to maintain a probabilistic estimate of the environment's true state given the available evidence.
\end{comment}

\subsection{Latent Dictionary}
After training, the representation of each possible state $h3$ is passed through the encoder to obtain the corresponding latent space realization $z$. Using the frequentist approach, for each node $h3$, the possible transitions are collected in a dictionary and then replaced with their corresponding latent space realizations. When a certain node appears as the current state in our VRN, we sample uniformly from the latent dictionary of this node. This is akin to sampling the stationary next state based on the possible transitions in the training set, but in latent space. However, these stationary transitions are combined with the belief vector, thus learning the non-stationary transitions, while taking into consideration all previously generated states. As discussed, this belief vector is computed based on the $y$ most probable $h3$ regions and their corresponding node2vec embeddings.

Formally, the dictionary $D$ can be represented as a mapping for all nodes $(k_1, \ldots, k_s)$ in $\mathcal{G}$:
$$\{k_1, \ldots, k_s\}: \{(l_1^{j_1}), \ldots, (l_s^{j_s})\} 
$$
For all pairs of transitions between nodes in the training set: $(k_i, k_j)$, the dictionary is updated: $D' = D \cup \{k_{i}: (l_j)\}$ where $l_j = MLP_{enc}(k_j)$.

\begin{comment}
    simulated on the network, and the latent space is added to the dictionary the corresponding state. Given a Trajectory $T_i = (h3_1^i, h3_2^i, \ldots)$,  
\end{comment}

\subsection{Occupancy Sampler}
The occupancy sampler chooses a dot (a very small region) and then from the dot, it chooses a point at random.
The sampling is conducted based on the heatmap of the region as depicted in Figure~\ref{fig:occupancy}. The process is as follows:
\begin{enumerate}
    \item Divide the region into dots, where each dot represents an H3 region with a high zoom level that lies within the region.
    \item Based on the distribution of real points inside the regions, sample a dot.
    \item Sample a point that lies inside the dot randomly.
\end{enumerate}
The intuition behind using a two-step hierarchical sampler (i.e., going to dots and then choosing points) as opposed to choosing points directly from the heatmap is as follows.
If we represented a region as a set of points and used a heatmap to choose the points directly, then we would be choosing points from existing trajectories with high probability. By introducing a small region that we select based on the heatmap and randomizing the choice of the point from the dot, we introduce a level of generalization that is useful in introducing additional stochasticity in point selection.
An added advantage of this approach is it helps ensure the points selected have geographic validity, i.e., are close enough to actual points because of how we choose the output points based on a dot, which is a very small region. This can been visually confirmed from the generated paths in Figure~\ref{fig:graph1}, where WildGraph trajectories, much like real trajectories, almost always avoid 
congregating in the middle of the sea. In contrast, this is not the case with some benchmark methods, including competitive methods such as WildGEN.

\begin{figure}[t]
    \centering
    \begin{subfigure}{.27\textwidth}
        \centering
        \includegraphics[width=\linewidth]{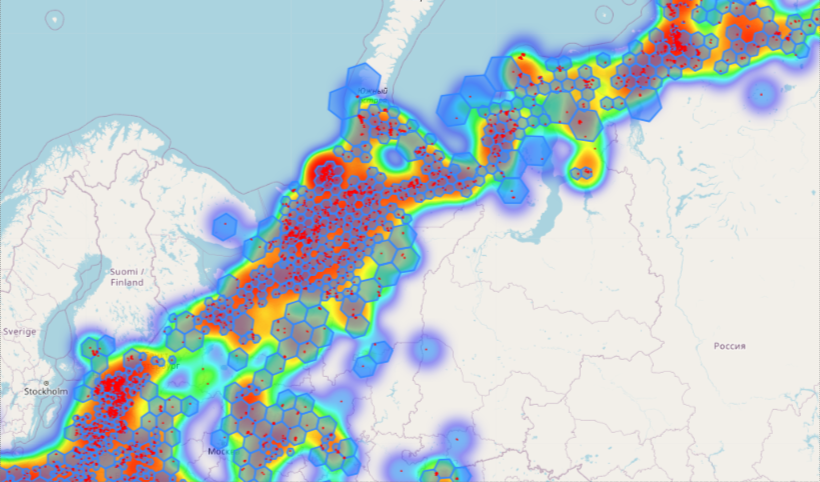}
        \caption{Regions of different sizes can be observed}
        \label{fig:image1}
    \end{subfigure}

    \begin{subfigure}{.27\textwidth}
        \centering
        \includegraphics[width=\linewidth]{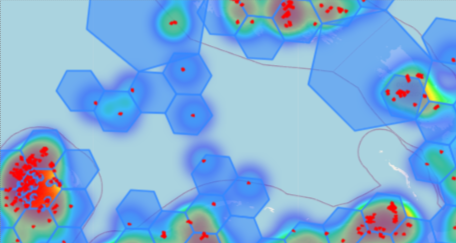}
        \caption{Zooming shows a sample distribution of dots on different regions}
        \label{fig:image2}
    \end{subfigure}

    \begin{subfigure}{.27\textwidth}
        \centering
        \includegraphics[width=\linewidth]{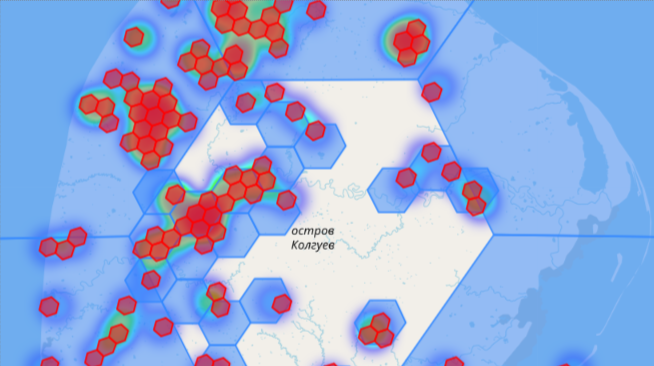}
        \caption{Further zooming shows dots as small regions}
        \label{fig:image3}
    \end{subfigure}
    \caption{The breakdown of the spatial domain into regions (blue), and dots (red) projected over a heatmap}
    \label{fig:occupancy}
\end{figure}

 %\begin{figure}
%\includegraphics[width=0.35\textwidth]{images/occupancy.png} 
%\centering
%\caption{Occupancy Sampling of dots based on heatmap}
%%\begin{tabular}{r@{: }l r@{: }l}
%%Green & Real Trajectories & Red & Generated Trajectories 
%%\end{tabular}
%\label{fig:occupancy}
%\end{figure}

\section{Experiments}

\subsection{Baselines}
In order to assess the performance of our framework, we compare the results obtained to
those from existing methods for animal trajectory generation. As discussed earlier, CRW-based and other traditional motion methods fail at generating long-horizon trajectories for tasks such as migration. This was demonstrated in our workshop paper~\cite{al-lawati_wildgen_2023}.

As such we restrict our evaluation to deep-learning based baselines that appear to be plausible for the task of long-horizon trajectory generation, namely:

\begin{itemize}
    \item VAE
    \item WildGEN
    \item GAN
    \item Transformer
    \item Levy Flight
\end{itemize}

In the following subsections, we briefly introduce each method.

\subsubsection{VAEs} As discussed earlier, VAEs are a supervised training model that map inputs into a latent space while minimizing the distributional divergence between the input and the output. The latent space is subsequently sampled to generate original samples.

\subsubsection{WildGEN} WildGEN is a model based on VAEs that performs post-processing on the generated trajectories. Two forms of post-processing are applied:
\begin{itemize}
    \item Smoothing: utilizes a Savitzky-Golay smoothing filter~\cite{press1990savitzky} to reduce excessive wandering.
    \item Minimum Bounding Region: drops any trajectories that venture out of a minimum bounding polygon of the known points.
\end{itemize}

\subsubsection{Generative Adversarial Network (GAN)}

GAN is a model consisting of two neural networks, the generator and the discriminator, which engage in a minimax game. The generator aims to produce realistic synthetic data samples from random noise, while the discriminator learns to distinguish between real and generated samples. Through adversarial training, GANs progressively improve both the generator's ability to create realistic data and the discriminator's proficiency in discerning real from fake. This adversarial process drives the generator to generate increasingly realistic samples~\cite{goodfellow_generative_2020}. 

\subsubsection{Transformer}
Transformers are networks that have the capacity to auto-regressively generate sequences of tokens by attending to previous tokens. The HNG facilitates the use of H3 word-based language transformers, such that a trajectory is a sentence that consists of $m$ words, each of which is an H3 address.

\subsection{Levy Flight}
Levy Flight~\cite{zaburdaev2015levy} is a stochastic process that models movement using a Cauchy distribution to generate trajectory points. These points are determined by step length, variance, and angular standard deviation, which we measure using the real trajectory samples. The algorithm utilizes these parameters to generate trajectories that attempt to capture and reproduce the movement patterns.

\subsection{Evaluation Metrics}
Our selection of evaluation metrics is based on the objectives of trajectory generation, which we reiterate here for convenience:

\begin{enumerate}
    \item (Path Similarity) How similar are the points of the synthetic trajectories to the real trajectories?
    \item (Likeness) How well do the synthetic trajectories align with the real trajectories in terms of areas of concentration?
        \item (Coverage) The percentage of samples in the held-out set that are closest to any generated sample, for each path similarity metric.
\end{enumerate}

\begin{table*}[h!]
   \caption{Test Results for the geese and stork datasets using WildGraph and other benchmarks using different measurements} 
   \label{tab:results}

   \centering
   \scalebox{0.9}{
   \begin{tabularx}{0.86\textwidth}{l|l||cc|cc|cc|cc||c|c}
   \toprule\toprule
   \multirow{2}{*}{\textbf{Dataset}} & 
   \multirow{2}{*}{\textbf{Method}}  & 
   \multicolumn{2}{c|}{\textbf{Hausdorff}} &  
   \multicolumn{2}{c|}{\textbf{DTW}} & 
   \multicolumn{2}{c||}{\textbf{FDE}}  & 
   \multicolumn{2}{c||}{\textbf{SatCLIP}}  & 
   \multirow{2}{*}{\textbf{r-Coefficient}}  & 
   \multirow{2}{*}{\textbf{$chi^2$}} \\ 
    & &value       &cov  &value       &cov  &value &cov   &value &cov    \\
   \midrule
   \multirow{5}{*}{\textbf{Geese} } %(Sequence Len = 185)} 
   & VAE     & 7.38 & 0.56 & 56.85 & 0.66 & 3.67 & 0.82 & 0.57 & 0.53 & 0.47 & 1672  \\
   & GAN     & 25.53 & 0.33 & 194.74 & 0.25 & 8.62 & 0.4 & 0.44 & 0.31 & 0.30 & 7490\\
   & WildGEN  & \textbf{7.15} & 0.71 & \textbf{48.94} & 0.82 & 3.24 & \textbf{0.86} & \textbf{0.6} & 0.8 & \textbf{0.80} & 695\\
   & 
   \textbf{WildGraph}    &  9.63 & \textbf{0.93} & 71.08 & \textbf{0.92} & \textbf{2.93} & 0.81 & 0.57 & \textbf{0.89} & \textbf{0.80} & \textbf{628}\\
   & Transformer    & 20.82 & 0.88 & 216.08 & 0.78 & 13.58 & 0.78 & 0.40 & 0.67 & 0.48 & 1636\\
      & Levy Flight & 31.96 & 0.09 & 332.14 & 0.12 & 29.29 & 0.08 & 0.26 & 0.29 & -0.09 & 9706 \\

   \bottomrule
      \multirow{5}{*}{\textbf{Stork} } % (Sequence Len = 79)} 
   & VAE     & 8 & 0.71 & 43.3 & 0.68 & 4.22 & 0.65 & 0.58 & 0.74 & 0.56 & 666\\
   & GAN     & 14.77 & 0.23 & 59.95 & 0.21 & 3.9 & 0.42 & 0.56 & 0.35 & 0.62 & 1491\\
   & WildGEN  & \textbf{5.66} & 0.68 & \textbf{29.26} & 0.68 & \textbf{3.22} & 0.68 & \textbf{0.63} & 0.69 & 0.77 & 743\\
   & \textbf{WildGraph}    &  6.61 & \textbf{0.84} & 35.98 &\textbf{ 0.81} & 3.6 &\textbf{ 0.74} & 0.59 &\textbf{ 0.83} & \textbf{0.78} & \textbf{438}\\
   & Transformer    &  16.23 & 0.65 & 87.19 & 0.66 & 8.95 & 0.58 & 0.53 & 0.65 & 0.56 & 642\\
      & Levy Flight & 25.75 & 0.21 & 202.21 & 0.09 & 31.27 & 0.11 & 0.37 & 0.17 & 0.09 & 4492\\ 

   \bottomrule
   \end{tabularx}
   }
\end{table*}

In order to measure Path Similarity, we use three distance measures: Hausdorff Distance, which measures the steepest gap, Dynamic Time Warping (DTW), which measures the spatial similarity over the whole path, and Final Displacement Error (FDE), which we define as the gap between the final positions of a generated trajectory and the nearest real trajectory: $\text{FDE} = \|p_t^m - \hat{p}_s^m\|$. 

In addition, we also utilize a satellite imagery-based method, SatCLIP, to measure feature similarity. SatCLIP generates a spatial embedding that summarizes the geographic representation of trajectories, which can then be compared using cosine similarity. The choice of SatCLIP is motivated by its capacity to embed remote geo-locations where other embedding methods, such as hex2vec~\cite{wozniak_hex2vec_2021} were found to provide limited or no data. 

On the other hand, we measure Likeness using Pearson Correlation Coefficient (r-Coeff) and Chi-squared ($chi^2$) similarity.

For each generated trajectory, we compare it to every trajectory from the test set and use the closest match. The reported distance metric for the experiment is the mean of these values over a fixed number of generated trajectories.

\begin{comment}
Next, we briefly introduce each metric.
\subsubsection{Hausdorff Distance}
The Hausdorff distance thus provides a way to measure the dissimilarity between two trajectories by considering the maximum minimum distance between points of one trajectory to the other trajectory. This is a useful measure in combination with other measures such as DTW to evaluate similarity.

\subsubsection{Dynamic Time Warping (DTW)}
Dynamic Time Warping (DTW) is a method used to compare two sequences of temporal data that may vary in speed or timing. The DTW algorithm works by finding an optimal alignment between two sequences by identifying the nearest point in one trajectory to another.

\subsubsection{Final Displacement Error (FDE)}
Final Displacement Error (FDE) is a simple metric that identifies whether the trajectories follow an expected path by comparing the final positions of generated and real trajectories.

\subsubsection{Coefficient of Correlation }
Pearson's coefficient of correlation quantifies the strength and direction of a linear relationship between two distributions, using r-value, where an ``$r$'' value close to 1 signifies a positive linear relationship between the generated and real trajectories.

\subsubsection{Chi-squared}
The chi-squared similarity measure is an additional metric used to quantify likeness of distributions. A lower value implies greater similarity.
\end{comment}
Finally, coverage (cov) measures how representative the generated trajectories are of the test set of real trajectories. This measure indicates what percentage of trajectories in the held-out-set were closest to a generated trajectory. For example, given generated trajectory $S_i$, if the best Hausdorff or DTW measure was observed when compared to a real trajectory $T_j$, we add $T_j$ in the coverage list. This measure quantifies the fraction of trajectories added in the coverage list for each distance metric.

\subsection{Datasets}

\begin{table}[h!]
    \centering
    \caption{Summary of Geese and Stork Datasets}
    \resizebox{0.5\textwidth}{!}{
    \begin{tabular}{@{}lccc@{}}
        \toprule
        \textbf{Dataset} & \textbf{Clipping Window} & \textbf{Length (days)} & \textbf{Sample Size} \\ 
        \midrule
        Western Palearctic greater white-fronted geese~\cite{geese} (\textbf{Geese}) & Mar 1 - Sep 1 & 185 & 60 \\ 
        White Stork Ciconia ciconia~\cite{stork} (\textbf{Stork}) & Aug 15 - Nov 1 & 79 & 77 \\ 
        \bottomrule
    \end{tabular}
    }
    
    \label{tab:datasets}
\end{table}

Table~\ref{tab:datasets} lists the two datasets of migratory birds utilized for testing, Both datasets were obtained from Movebank\footnote{https://www.movebank.org} and were processed using Moveapps~\cite{kolzsch2022moveapps} to limit observations to one per day for each subject. For both datasets, any subject
that had significant missing readings was dropped. Subjects that were tracked over multiple years were split into separate subjects. Simple data-interpolation was utilized to fill in any remaining isolated gaps in the trajectories.

\subsection{Experimental Settings}
Preliminary tests were performed to evaluate the convergence properties of each method. We used a fixed learning rate of \(1 \times 10^{-3}\) with the Adam optimizer. For the GAN experiment, additional decay rate parameters were used to induce convergence.

Each experiment used 5-fold cross-validation with a different held-out set to generalize given the small data nature of the problem. All reported values are the average of five identical experiments. This choice was made to normalize the randomness of the generation process and report more consistent results. 

The experiments were conducted utilizing a single 80GB NVIDIA A100 GPU paired with 8 CPUs, operating on Python 3.8, PyTorch 2.1.1, and CUDA 11.8.

\subsubsection{Distributional Settings}
To measure the likeness (r-Coeff and $chi^2$), we utilize the discrete cluster distributions between the generated points and the test set. To create the clusters, the optimal cluster size for K-means was pre-calculated and used as input in the distributional likeness measurements. The cluster sizes were determined using a Silhouette plot and were selected visually. Specifically, the cluster sizes for the geese and stork datasets are 16 and 14, respectively.

\subsubsection{Split Threshold}
As discussed in section 4.1, here, we analyze the effect of the split threshold on generalization. Figure~\ref{fig:split-threshold} depicts how coverage values and r-Coeff are affected by different values of the split threshold.

\begin{figure}[h]
\includegraphics[width=0.3\textwidth, height=3.3cm]{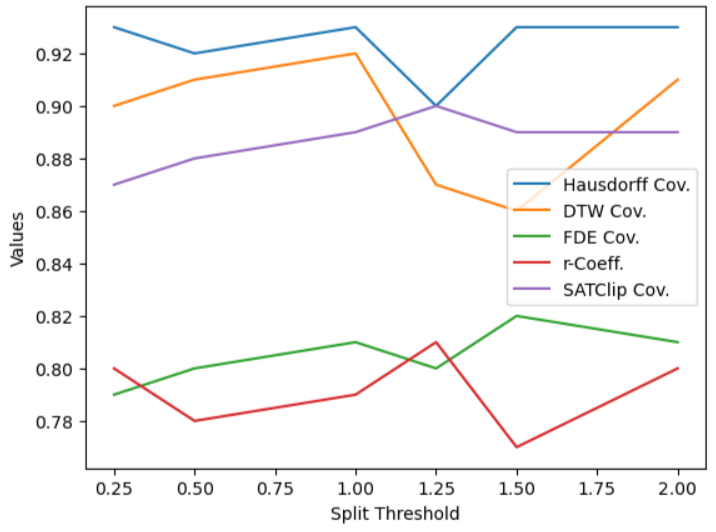} 
\centering
\caption{Coverage as a function of split threshold}
%%\begin{tabular}{r@{: }l r@{: }l}
%%Green & Real Trajectories & Red & Generated Trajectories 
%\end{tabular}
\label{fig:split-threshold}
\end{figure}

For our experiments, we use a split threshold (r) value of 1 which was experimentally validated to provide high coverage, while providing high r-Coeff.

In the ablation study, we explore how the results change as a function of the split threshold.

\subsection{Results}

Table~\ref{tab:results} lists the results obtained for WildGraph and the other benchmarks on the abovementioned metrics. WildGraph achieves superior overall performance on most coverage and likeness measurements. Regarding the distance metrics, WildGraph reports results comparable to those of VAE and WildGEN on both datasets, with some exceptions where WildGEN outperforms WildGraph. However, given that WildGraph provides better coverage results, which implies that the generated trajectories from WildGraph generalize better on the held-out set. On the other hand, WildGEN fails to generalize on a larger subset of the held-out set compared to WildGraph. A key limitation of WildGEN is that it applies to a continuous spatial domain, leading to unrealistic paths, such as birds flying across the ocean. By discretizing the spatial domain, WildGraph ensures generated paths remain within realistic regions. This makes WildGraph more useful for generating trajectories similar to real, unknown paths.

Similarly, while WildGEN provides a higher SatCLIP result, it also fails to generalize as effectively as WildGraph. WildGraph outperforms all other methods on this measure. However, this finding also suggests that while paths traversed by birds can be captured well to some extent using SatCLIP, it may fail to address all the nuances of the movement. To understand this further, we measure the average pairwise similarity between real data as 0.4845 and 0.4805 for the geese and stork datasets, respectively. In the experiments, however, the results considered only the closest matches from the training set against each trajectory in the held-out set, as discussed above, which may explain the higher scores. 

In summary, the results demonstrate WildGraph's exceptional capacity to learn the properties of very small datasets but still generate new trajectories in a meaningful way that exceeds all other methods in terms of generalization while providing comparable distance, likeness, and geographic similarity measures.

\subsection{Ablation Study}
\begin{table*}[tb]
   \caption{Test Results for the geese and stork datasets that demonstrate the contribution of key components and constructs} 
   \label{tab:example}

   \centering
   \scalebox{0.7}{
   \begin{tabularx}{0.9\textwidth}{l|l||cc|cc|cc|cc||c|c}
   \toprule\toprule
   \multirow{2}{*}{\textbf{Dataset}} & 
   \multirow{2}{*}{\textbf{Method}}  & 
   \multicolumn{2}{c|}{\textbf{Hausdorff}} &  
   \multicolumn{2}{c|}{\textbf{DTW}} & 
   \multicolumn{2}{c||}{\textbf{FDE}}  & 
   \multicolumn{2}{c||}{\textbf{SatCLIP}}  & 
   \multirow{2}{*}{\textbf{r-Coefficient}}  & 
   \multirow{2}{*}{\textbf{$chi^2$}}\\ 
    & &value       &cov  &value       &cov  &value &cov   &value &cov    \\
   \midrule
   \multirow{5}{*}{\textbf{Geese} } %(Sequence Len = 185)} 
   & Uniform H3 / Coarse     & 10.02 & \textbf{0.93} & 73.68 & \textbf{0.92} & 4.03 & \textbf{0.87} & 0.51 & 0.85 & 0.62 & 815\\
   & Uniform H3 / Granular      & 10.3 & 0.85 & 83.18 & 0.82 & 3.21 & 0.75 & 0.56 & 0.8 & 0.68 & 889\\
   & BoW (no node2vec)  & 15.99 & 0.1 & 108.11 & 0.08 & \textbf{1.67} & 0.17 & 0.54 & 0.1 & 0.43 & 8364\\
     & No PE    & \textbf{8.07} & 0.92 & \textbf{59.63} & 0.9 & 3.02 & 0.82 & 0.53 & \textbf{0.93} & 0.71 & 762\\

   & \textbf{WildGraph}    &  9.63 & \textbf{0.93} & 71.08 & \textbf{0.92} & 2.93 & 0.81 & \textbf{0.57} & 0.89 & \textbf{0.80} & \textbf{628}\\

   \bottomrule
      \multirow{5}{*}{\textbf{Stork} } % (Sequence Len = 79)} 
   & Uniform H3 / Coarse     & \textbf{6} & 0.84 & \textbf{32.57} & 0.77 & 3.07 & \textbf{0.75} & 0.58 & 0.81 & 0.7 & \textbf{339}\\
   & Uniform H3 / Granular      & 7.27 & 0.57 & 41.84 & 0.6 & 4.41 & 0.56 & \textbf{0.6} & 0.75 & \textbf{0.78} & 348\\
   & BoW (no node2vec)  &12.36 & 0.13 & 52.39 & 0.08 & \textbf{2.02} & 0.08 & 0.49 & 0.09 & 0.13 & 3932\\
     & No PE    &6.57 & 0.79 & 38.18 & 0.74 & 4.14 & 0.67 & 0.59 & 0.77 & 0.73 & 421\\

   & \textbf{WildGraph}    &   7.11 & \textbf{0.84} & 38.11 & \textbf{0.8} & 3.67 & 0.74 & 0.59 & \textbf{0.85} & 0.74 & 369\\

   \bottomrule
   \end{tabularx}
   }
\end{table*}

We perform two ablation studies: \begin{enumerate}
    \item Measure how each component of WildGraph contributes to the results achieved.
    \item Investigate how different values of the split threshold affect the results.

\end{enumerate}

As evident from Table \ref{tab:example}, node embeddings are the most impactful component of the WildGraph framework based on the observed results. Using BoW, however, provided poor results, possibly due to its high dimensionality and the lack of node relationships in the embeddings. Additionally, PE significantly enhances the coverage of the methods. Dynamically breaking the regions with the HNG significantly improves likeness, while using coarse or granular regions shows limited effectiveness. These results highlight WildGraph's robustness and efficiency across different evaluation metrics when combining all its components.

In addition, by running experiments with fixed region sizes at both high zoom (Granular) and low zoom (Coarse) levels, it can be observed that the HNG is a key component that improves trajectory similarity specifically, but also the likeness on most evaluation metrics.

For additional insight into our solution, we also evaluate how different values of split threshold impact the results. This is summarized in Table \ref{tab:example2}. Subtle variations in the results as a function of the split threshold is observed. The choice of threshold can be motivated by the dataset at hand. While WildGraph (using the specified split-threshold) is not the best in every metric, it's close to the best in both distance and coverage. No method outperforms all others in every metric. On the whole, we believe a threshold of $1.0$  is the best overall and provides the best compromise on these metrics for both datasets.
%; for example, for both datasets tested in this case, a threshold of $1.0$ is most appropriate. 
However, for trajectories where the length is shorter, a smaller threshold may provide a better outcome.

\subsection{Qualitative Analysis}
Figure \ref{fig:six_graphs} depicts the performance of WildGraph in comparison to other benchmarks. A preliminary visual examination reveals that WildGraph trajectories exhibit more realistic paths. Unlike WildGEN and VAEs, and similar to the real samples, trajectories do not cluster in the middle of the sea. It is also observed that GAN and Transformer-based approaches provided poor results, which is expected given the small data nature of the problem. More results including Levy Flight are omitted for brevity and may be examined in~\cite{al-lawati_wildgen_2023}.

\begin{table*}[tb]
   \caption{Test Results for the geese and stork datasets using WildGraph with different split threshold configurations} 
   \label{tab:example2}

   \centering
   \scalebox{0.7}{
   \begin{tabularx}{0.9\textwidth}{l|l||cc|cc|cc|cc||c|c}
   \toprule\toprule
   \multirow{2}{*}{\textbf{Dataset}} & 
   \multirow{2}{*}{\textbf{Split Threshold}}  & 
   \multicolumn{2}{c|}{\textbf{Hausdorff}} &  
   \multicolumn{2}{c|}{\textbf{DTW}} & 
   \multicolumn{2}{c||}{\textbf{FDE}}  & 
   \multicolumn{2}{c||}{\textbf{SatCLIP}}  & 
   \multirow{2}{*}{\textbf{r-Coefficient}}  & 
   \multirow{2}{*}{\textbf{$chi^2$}}\\ 
    & &value       &cov  &value       &cov  &value &cov   &value &cov    \\
   \midrule
   \multirow{5}{*}{\textbf{Geese} } %(Sequence Len = 185)} 
   & 0.25     & 9.55 & 0.93 & 71.28 & 0.9 & 2.76 & 0.79 & 0.57 & 0.87 & 0.8 & 614\\
   &0.5      & 9.62 & 0.92 & 70.06 & 0.91 & 2.96 & 0.8 & 0.56 & 0.88 & 0.78 & 669\\
   
   & \textbf{WildGraph (1.0)}    &  9.63 & \textbf{0.93} & 71.08 & \textbf{0.92} & \textbf{2.93} & 0.81 & \textbf{0.57} & 0.89 & 0.80 & \textbf{628}\\
   & 1.5  & \textbf{9.29 }& 0.9 & \textbf{66.5 }& 0.87 & 2.98 & 0.8 & 0.55 & \textbf{0.9} & \textbf{0.81} & 635\\
     & 2.0   & 9.52 & 0.93 & 70.15 & 0.91 & 3.15 & \textbf{0.81 }& 0.54 & 0.89 & 0.8 & 632\\
   \bottomrule
      \multirow{5}{*}{\textbf{Stork} } % (Sequence Len = 79)} 
   & 0.25    & 7.14 & 0.77 & 37.6 & 0.75 & 3.44 & 0.65 & \textbf{0.61} & 0.82 & 0.78 & 407\\
   & 0.5      & 6.97 & 0.8 & \textbf{35.88} & 0.75 & \textbf{3.29} & 0.7 & 0.6 & 0.83 & 0.74 & 447\\
      & \textbf{WildGraph (1.0)}      &  \textbf{6.61} & 0.84 & 35.98 & \textbf{0.81} & 3.6 &\textbf{ 0.74 }& 0.59 & 0.83 & \textbf{0.78} & 438\\
   & 1.5 & 7.07 & 0.82 & 37.84 & 0.77 & 3.61 & 0.71 & 0.58 & \textbf{0.86} & 0.74 & 422\\
     & 2.0    & 6.84 & \textbf{0.86} & 36.65 & 0.78 & 3.51 & 0.73 & 0.59 & 0.84 & 0.75 & \textbf{352}\\
   \bottomrule
   \end{tabularx}
   }
\end{table*}

\begin{figure*}[htb]
    \centering
    \begin{subfigure}[b]{0.28\textwidth}
        \centering
        \includegraphics[width=\textwidth, height=2.5cm]{images/wildgraph.png}
        \caption{WildGraph}
        \label{fig:graph1}
    \end{subfigure}
    \hfill
    \begin{subfigure}[b]{0.28\textwidth}
        \centering
        \includegraphics[width=\textwidth, height=2.5cm]{images/real.png}
        \caption{Real Samples}
        \label{fig:graph2}
    \end{subfigure}
    \hfill
    \begin{subfigure}[b]{0.28\textwidth}
        \centering
        \includegraphics[width=\textwidth, height=2.5cm]{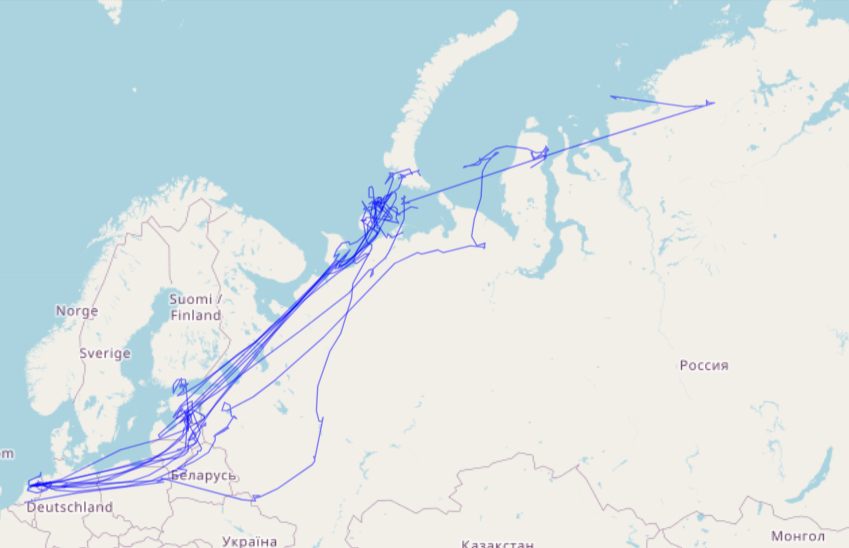}
        \caption{WildGEN}
        \label{fig:graph3}
    \end{subfigure}
    \vskip\baselineskip
    \begin{subfigure}[b]{0.28\textwidth}
        \centering
        \includegraphics[width=\textwidth, height=2.5cm]{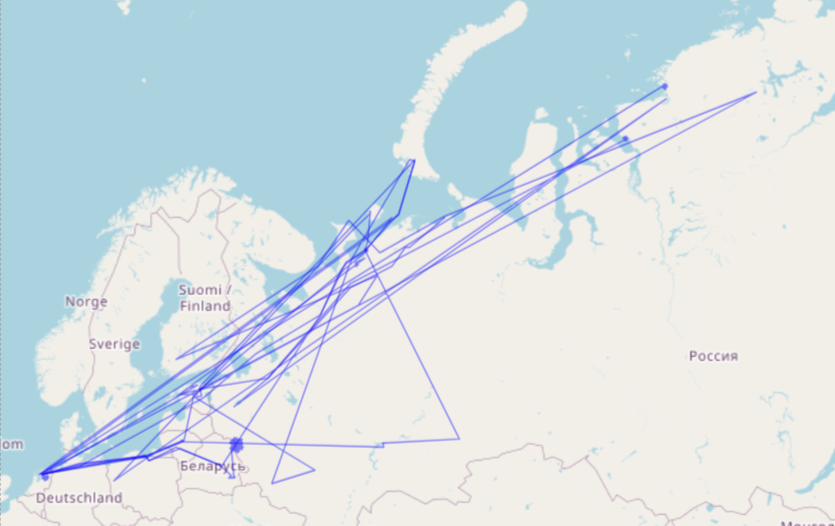}
        \caption{WildGraph-Transformers}
        \label{fig:graph4}
    \end{subfigure}
    \hfill
    \begin{subfigure}[b]{0.28\textwidth}
        \centering
        \includegraphics[width=\textwidth, height=2.5cm]{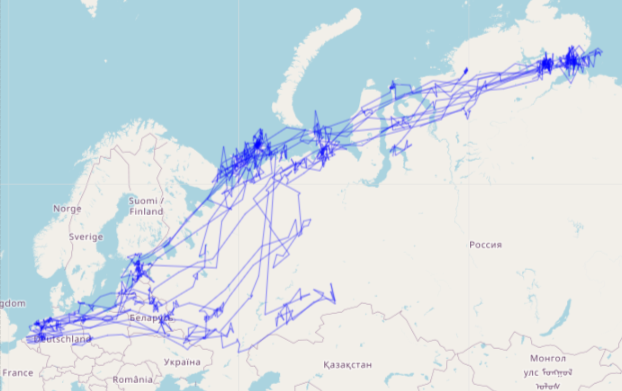}
        \caption{VAE}
        \label{fig:graph5}
    \end{subfigure}
    \hfill
    \begin{subfigure}[b]{0.28\textwidth}
        \centering
        \includegraphics[width=\textwidth, height=2.5cm]{images/gan.png}
        \caption{GAN}
        \label{fig:graph6}
    \end{subfigure}
    \caption{Random Real and Generated Trajectories from WildGraph and the benchmarks}
    \label{fig:six_graphs}
\end{figure*}

\section{Conclusions}
In this paper, we study the problem of trajectory generation over a long horizon, with a focus on applications in wildlife in a small data setting. Existing methods fail to consider the underlying network that describes this movement. In order to solve this problem, we introduced a hierarchical prototype network that captures the global movement characteristics and recursively localizes regions. We proposed a novel VRN to capture the transitions and feed them into a latent dictionary. The effective use of the H3 library, and node2vec embeddings has contributed to the success of the WildGraph framework, as measured in the experimental results and ablation study.

This study adds to the body of literature where graph-based solutions can effectively address unique problems.
Ours is the first to show how one can generate trajectories for wildlife movement. In our case, the small data nature of wildlife datasets combined with wide-horizon predictions presented significant challenges.
Our results demonstrate the effectiveness of our proposed 
approaches that utilize specially designed encoder-decoder network and work remarkably well.

Furthermore, similar to other generative models, such as GPT, our method treats regions as tokens and trajectories as an ordered list of tokens. With increased training data, it may be possible to fine-tune for downstream applications, such as identifying bird characteristics, like age or type, based on their movement patterns.

Finally, it may be interesting to explore how this approach could be adapted for similar problems such as the trajectory prediction task. Additionally, the simplicity of the approach might generalize well to congruous problems in the vehicle movement or human movement domains.
%%
%% The acknowledgments section is defined using the "acks" environment
%% (and NOT an unnumbered section). This ensures the proper
%% identification of the section in the article metadata, and the
%% consistent spelling of the heading.

%%
%% The next two lines define the bibliography style to be used, and
%% the bibliography file.
\bibliographystyle{ACM-Reference-Format}
\bibliography{MAIN}

\appendix
\section{Theoretical Analysis on Probability of Replication}\label{appendixa}
Here, we investigate the probability of WildGraph generating a sequence of regions that replicates a real trajectory in the training set. %First, the positional embedding vector, used as input to the decoder at each generation step, considers the history of the already visited regions. Further, sampling the latent space at each generation step augments the stochasticity of the generated trajectories, while preserving the probabilistic nature of the problem. 
First, to summarize WildGraph, at each transition step the VRN performs the following:

\begin{enumerate}
\item Find the $y$ most probable states.
\item Find the corresponding $y$ \textit{node2vec} representation vectors.
\item Compute the weighted average for these vectors, i.e. belief vector. 
\item Utilize this belief vector as an input to the decoder during the generation.

\end{enumerate}

A simple theoretical analysis of the methods demonstrates an extremely low probability of replication by our approach. 

\begin{proposition}
Given h3 regions $h3_i, h3_{i+1}$, the transition probability $P(h3_{i+1}|h3_i)$ generated by VRN is $ \approx \frac{1}{y}$, where $y$ is the number of top most probable nodes used in the belief vector. 
\end{proposition}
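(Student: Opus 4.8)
The plan is to follow a single generation step $h3_i \to h3_{i+1}$ through the WildGraph pipeline and show that the induced conditional law over next states is, to leading order, uniform on a set of size $y$. First I would write the step as the composition of the three stochastic operations described in Sections 4.3--4.5: a uniform draw of a latent $z$ from the dictionary entry of $h3_i$; formation of the belief vector $b=\sum_{j=1}^{y}\mathrm{Prob}(h3_j)\,h3_{emb_j}$ over the top-$y$ states of the previous step; and the decoder--softmax map $(z,b,PE)\mapsto \mathbf{prob}$ that produces the next-state distribution. The object to estimate is the marginal $P(h3_{i+1}\mid h3_i)$ obtained after averaging over the latent draw.

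The core of the argument is that the identity of the reconstructed next state is carried by the latent $z$, not by the belief vector or the positional encoding. Because each candidate transition out of $h3_i$ is stored as a distinct latent $l_j=\mathrm{MLP}_{enc}(h3_j)$ and the decoder was trained to reconstruct $h3_j$ from $l_j$, the decoder acts, up to reconstruction error, as an inverse of the encoder on the dictionary support. Hence the probability assigned to $h3_{i+1}$ equals the probability of drawing the corresponding latent from the dictionary of $h3_i$. Section 4.5 states that this draw is uniform, so if the dictionary presents $y$ effective candidates --- the same top-$y$ states aggregated in the belief vector --- each transition receives mass $\frac{1}{y}$.

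I would then make the ``$\approx$'' explicit by accounting for the two deviations from exact uniformity: (i) the softmax leaks a small amount of mass onto non-candidate regions and reshapes the candidate masses according to the spread of the top-$y$ logits, and (ii) the frequentist construction of the dictionary can carry multiplicities, so the raw dictionary law is the empirical transition frequency rather than the flat $\frac{1}{y}$. I would bound (i) by the total-variation distance between the realized softmax and the uniform law on the $y$-element support, controlled by the gap between the largest and the $y$-th largest logit, and argue that (ii) is reconciled by restricting attention to the $y$ most probable outcomes, beyond which the tail mass is negligible by the choice of $y$.

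The hard part will be reconciling the uniform-sampling story with the frequentist, multiplicity-bearing dictionary: a literal reading of Section 4.5 makes $P(h3_{i+1}\mid h3_i)$ the empirical transition probability, which equals $\frac{1}{y}$ only when the top-$y$ candidates are near-equiprobable. I expect to resolve this by treating $y$ as tuned so that the probability mass is concentrated and roughly flat over its top-$y$ support --- the regime in which the uniform approximation is tight --- and by emphasizing that the structural uniform draw, rather than the learned logit magnitudes, is what pins the leading-order value $\frac{1}{y}$. Quantifying the residual error term, rather than establishing the first-order value, is where the real work lies, and it is the step I would expect a careful referee to scrutinize most.
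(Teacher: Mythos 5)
Your proposal is substantially more elaborate than the paper's own argument, and it locates the source of the $\tfrac{1}{y}$ in a different place. The paper's entire proof is two sentences: because the belief vector is the weighted sum $\sum_{i=1}^{y}\mathrm{Prob}(h3_i)\times h3_{emb_i}$ over the $y$ most probable nodes, the next generated node is expected to be one of those $y$ candidates, and the transition probability is therefore ``congruous to'' $\tfrac{1}{y}$. That is the whole argument --- there is no appeal to the uniform draw from the latent dictionary, no decoder-as-inverse-of-encoder step, no softmax leakage bound, and no reconciliation with the frequentist multiplicities. You instead derive the uniformity from the dictionary sampling in Section 4.5 and treat the belief vector as a side input that does not carry the identity of the next state; the paper does the opposite, treating the belief vector's top-$y$ support as the reason the outcome is one of $y$ equally likely candidates. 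Both routes land on the same number, and yours is the more defensible one: the tension you identify --- that a literal reading of the frequentist, multiplicity-bearing dictionary gives the \emph{empirical} transition frequency, which equals $\tfrac{1}{y}$ only when the top-$y$ candidates are near-equiprobable, and that the dictionary entry for $h3_i$ need not even contain exactly $y$ effective candidates --- is a real gap that the paper's proof silently steps over. The error analysis you sketch (total-variation distance to the uniform law controlled by the logit gap) is work the paper never attempts; be aware that if you carry it out honestly you will end up proving a conditional statement (uniformity holds in the near-equiprobable regime) rather than the unconditional approximation the proposition asserts, which is also the honest reading of what the paper actually establishes.
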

\begin{proof}
Since the belief vector represents the $y$-most probable nodes, it is expected that the next generated node can be one of them. The expected probability of a transition is, therefore, congruous to $\frac{1}{y}$, given $ h3_{i+1} = \sum_{i=1}^{y} Prob(h3_i) \times h3_{emb_i}$. 
\end{proof}

Given training trajectories $T_{train}$ (each of length $m$) that consist of regions $(h3_1,h3_2,\ldots)$, the probability of generating a $S$ with an identical ordering of regions is upper-bounded by $(\max Prob(h3_i, h3_{i+1}))^m$, where $Prob(h3_i, h3_{i+1})$ is the most common transition probability in $T_{train}$. Given $Prob(h3_i, h3_{i+1}) \approx \frac{1}{y}$, the probability of duplicate trajectories is $(\frac{1}{y})^m \to 0$ when $y>1$ and $ m>>1$.

%%
%% If your work has an appendix, this is the place to put it.
\end{document}